\documentclass[11pt,fleqn]{article}

\usepackage{a4wide}
\usepackage{latexsym}
\usepackage{amssymb, amsmath, amsthm}
\usepackage{cite}
\usepackage{url}
\usepackage{pgfplots}
\pgfplotsset{width=15cm,compat=1.8}
\usepackage{subcaption}
\usepackage{tikz}
\usetikzlibrary{plotmarks}
\usepackage{graphicx}

\theoremstyle{plain}
\newtheorem{proposition}{Proposition}
\theoremstyle{definition}
\newtheorem{example}{Example}
\newtheorem{definition}{Definition}

\title{\bf On the  Relationship between Bayesian Networks and
  Probabilistic Structural Causal Models}

\author{Peter J.F. Lucas$^1$, Eleonora Zullo$^2$ \& Fabio Stella$^2$\\
 $^1$ Faculty of EEMCS, University of Twente, Enschede, the Netherlands\\
  ICIS, Radboud University, Nijmegen, the Netherlands \\[1.5ex]
  $^2$Dipartimento di Informatica, Sistemistica e comunicazion \\
  University of Milano-Bicocca, Milan, Italy \\
  Email: peter.lucas@utwente.nl,$\{\mbox{fabio.stella,e.zullo2}\}$@unimib.it}

\date{March 28, 2026}

\begin{document}

\maketitle

\begin{abstract}

  In this paper, the relationship between probabilistic graphical
  models, in particular Bayesian networks, and causal diagrams, also
  called structural causal models, is studied.  Structural causal
  models are deterministic models, based on structural equations or
  functions, that can be provided with uncertainty by adding
  independent, unobserved random variables to the models, equipped
  with probability distributions. One question that arises is whether
  a Bayesian network that has obtained from expert knowledge or learnt
  from data can be mapped to a probabilistic structural causal model,
  and whether or not this has consequences for the network structure
  and probability distribution. We show that linear algebra and linear
  programming offer key methods for the transformation, and examine
  properties for the existence and uniqueness of solutions based on
  dimensions of the probabilistic structural model. Finally, we
  examine in what way the semantics of the models is affected by this
  transformation.
  \\[2ex]
  \textbf{Keywords:} Causality, probabilistic structural causal
  models, Bayesian networks, linear algebra, experimental software.
\end{abstract}

\section{Introduction}

With the increasing interest in \emph{causality} by the probabilistic
reasoning community, several researchers, and in particularly Judea
Pearl \cite{pearl2000}, made a switch from a graphical representation
of uncertain interactions based on probability theory, towards a
deterministic representation of causal interactions with some added
external uncertain influence. For considerable time, Bayesian networks
(BNs) and related graphical models, such as maximal ancestral graphs
\cite{richardson2002}, were the main vehicle for research in
probabilistic graphical models \cite{pearl1988}. Since two decades
their role was slowly taken over by structural causal networks (SCMs)
\cite{pearl2000}.  This is a surprising shift, as originally all the
effort was aimed at developing a causal representation of uncertainty,
for example by \emph{causal} Bayesian networks \cite{fenton2012risk},
where the adjective `causal' emphasises the aim of modelling uncertain
causality. The implications of this step, although never emphasised in
a prominent way, are profound, as with a deterministic graphical
representation many of the independence assumptions developed for
probabilistic graphical models no longer hold
\cite{pearl1988,lauritzen1996}. There are also other reasons to raise
eyebrows, because if a local probability distribution is interpreted
as an uncertain causal mechanisms, possibly provided with abstraction,
why is it that causal mechanisms can be best captured by a
deterministic relationship with some added uncertainty?

Recently, structural causal models are also attracting much attention
from the deep learning community, as many mechanisms in nature and
social science can be best understood in terms of causal diagrams (e.g
\cite{Lagemann2023}), and, in addition, such mechanisms, once modelled
as causal diagrams, can be used to reliably generate synthetic data,
as demonstrated by Hollmann et al.\ \cite{hollmann2025}. Thus, structural causal
models can act as output of a machine learning process, but they offer
also potential as input to the learning process. However, also in
these situations, uncertainty will pop up. These developments
clearly show that understanding the nature of causal diagrams
is not only of interest to a small group of researchers. 

Thus, an attempt to understand the relationship between graphical
models used for uncertainty representation and those for the
representation of causal knowledge appears to be a good starting point
for research.  Part of the reasons for moving to causal networks
derives from the fact that probabilistic relationships can be
reversed, using Bayes' rule, which is a phenomenon that does not fit
well with the principle of causality. However, there are also certain
practical reason why this switch to deterministic equations is
made. It is this which we want to explore in this paper, with as as
subquestion whether this switch is always required and possible.  One
point which we want to raise is that probability distributions can
also be deterministic, thus also violating some of the basic
independence axioms of positive distributions, implying that in many
cases it is possible to use probabilistic graphical models, or
Bayesian networks in our case, to implement causality.  The most
important issue we wish to shed light on is the relationship between
structural causal network models, and in particular their
probabilistic extension, \emph{probabilistic structural causal models}
(PSCMs), and Bayesian networks. Are they equivalent, and is it always
possible to transform one into the other and vice versa?

The paper is organised as follows. We start with a brief motivating
example to illustrate the problem that is being studied in the present
paper. The example is followed by a summary of the mathematical
concepts needed to be able to answer the research questions with some
precision. The rest of Section \ref{sec.datamet} consists of a
discussion of several simple examples; it is hard to gain a feeling on
the subject of causality if one resorts to presenting only
abstractions. Next, we delve into issues concerning probability
distributions of the exogenous variables, essentially the
probabilistic extension of deterministic causal models, in Section
\ref{sec.exo}. The real novel contributions of the paper to the theory
of probabilistic structured causal models starts in Section
\ref{sec.trans}, where we show that well-known methods from linear
algebra, and in particular linear programming, are key to unravelling
the relationships between BNs and PSCMs.  A python program that
implements the theory developed in the paper, supporting
experimentation with the transformation of BNs to PSCMs, is presented
in Section \ref{sec.alg}.  The paper is concluded by a brief
discussion in Section \ref{sec.disc}.

\section{Motivating example}
\label{sec.motex}

Before going into details of the mathematical machinery that is
needed, we start by providing a very simple instance of the kind of
problems studied in this paper, concerning \emph{treatment} $T$ and its
effect $B$, here standing for \emph{blindness}.

\begin{figure}[t]
  \centering
  \subcaptionbox{\label{laba}}
  {\includegraphics[width=0.46\linewidth]{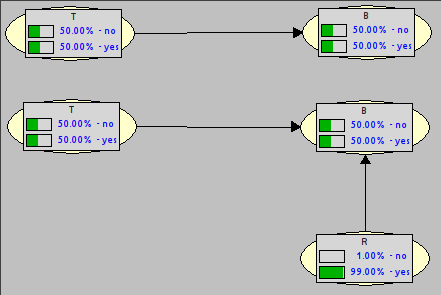}}
  \ \
  \subcaptionbox{\label{labb}}
 {\includegraphics[width=0.45\linewidth]{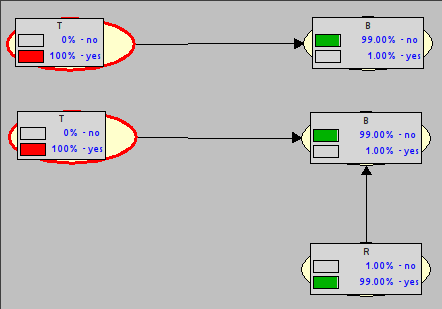}}
  \caption{Two different Bayesian network representations of the same distribution; the top one a regular Bayesian network, whereas the one at the bottom is a Bayesian network representation of a probabilistic structured causal model; (a): priors; (b) posteriors.}
  \label{one}
\end{figure}

In the Bayesian network (BN) shown in Figure \ref{one}, the
(uncertain) relationship between $T$ and $B$ is represented as the
joint probability distribution $P(T,B) = P(B \mid T) P(T)$, with
conditional distributions:
\begin{eqnarray}
  P(B = n \mid T = n) = 0.01 \\
  P(B = n \mid T = y) = 0.99
\end{eqnarray}
The following meaning is assumed: $T = y$ (resp.\ $T = n$) stands for
treatment (no treatment) and $B = y$ (resp.\ $B = n$) stands for being
blind (not being blind). Although $T$ is intended as an intervention,
it will be represented as a random variable $T$ that is uniformly
distributed. The top of Figure \ref{one} ((a) and (b)) depicts the
associated prior and instantiated Bayesian network with posteriors.

The aim is to represent this distribution by a probabilistic
structured causal model (PSCM), with a causal relationship between $T$ and $B$,
now called `endogenous' variables. Uncertainty comes from
outside, by a so-called `exogenous' variable $R$. All variables are
binary. The relationship between treatment $T$ and the presence or
absence of blindness $B$ can be represented by the function
\[f_B(T) = 1 - T = B\] The intuitive meaning is that treatment
($T = 1$) prevents a person from becoming blind ($B = 0$). As we wish
to represent this PSCM as a BN, adding uncertainty to this
deterministic relationship and function can be done by adding an extra
exogenous variable $R$ to $B$, as is shown in Figure \ref{one} ((a)
and (b)) at the bottom, and adding $R$ to the conditional probability
distributions as follows:
\begin{eqnarray*}
  P(B = n \mid T = y, R = n) = 0 \\
  P(B = n \mid T = y, R = y) = 1 \\
  P(B = n \mid T = n, R = n) = 1 \\
  P(B = n \mid T = n, R = y) = 0
\end{eqnarray*}
If we assume that the
uncertainty in $R$ is represented by the following distribution
$P(R = y) = 0.99$ and $P(R = n) = 0.01$. This will allow to use only \emph{one}
unconditional binary distribution $P(R)$ to represent the uncertainty of \emph{two}
conditional distributions in this special case, as it is possible to compute
$P(B \mid T)$ as follows using the marginalisation rule:
\begin{equation}
  P(B \mid T) = \sum_{r} P(B, R =r \mid T) = \sum_{r} P(B \mid T, R = r) P(R = r) \label{marg}
\end{equation}
As Figure \ref{one} indicates, we will get exactly the same
conditional distributions $P(B \mid T)$ for both cases, BN and BN
version of the PSCM, top and bottom.

The reason why the binary distribution $P(R)$ suffices for
representing \emph{two} conditional distributions $P(B \mid T = n)$
and $P(B \mid T = y)$ is due to the special case that
$P(B = x \mid T = n) = 1 - P(B = x \mid T = y)$ in all cases for
$x \in \{n,y\}$: not more than two probabilities can be exploited and
in this case, not more than two are needed. Clearly, this will not
hold for the general case, but it is unclear how many probabilities
are needed in general. Also unclear is how the structural causal
functions should be defined based on a given Bayesian
network. However, in order to precisely analyse the problem, some
mathematical foundations have to be provided, which will be done in
the following section.

\section{Mathematical preliminaries}
\label{sec.datamet}

We start by providing a short mathematical summary of the formal concepts that
will be utilised in the remainder of this paper.

\subsection{Bayesian networks}

Let $G = (V, A)$ be an \emph{acyclic directed graph} (DAG), with \emph{nodes}
$V = \{1,2,\ldots,n\}, n \geq 0$, and \emph{arcs}
$A \subseteq V \times V$.  The set of \emph{parents} of node $v \in V$, denoted
$\textrm{pa}(v)$, is defined as $\textrm{pa}(v) = \{u \mid (u, v) \in A\}$.
\begin{definition}[Bayesian network (BN)]
A \emph{Bayesian network}  $\mathcal{B} = (G, X, P)$ is defined as
an acyclic directed graph $G = (V, A)$, where the nodes have a 1--1 correspondence
with the \emph{random variables} $X$ which are indexed by elements in
$V$, i.e.\ $X_V = \{X_v \mid v \in V\}$. The associated (discrete)
\emph{joint probability distribution} $P$ of $\mathcal{B}$ is
factorised as follows:
\begin{eqnarray}
  P(X_V) = \prod_{v \in V} P(X_v \mid X_{\textrm{pa}(v)})
 \label{bn}
\end{eqnarray}
\end{definition}

Values of a set of variables
$X$ are often denoted by $X = x$, or sometimes simply by $x$.
An arc $(v,w) \in A$ is also
denoted by $v \rightarrow w$. The associated \emph{domain} of $X_S$,
$S \subseteq V$, is denoted by $D(X_S)$. In examples often no
distinction is made between node $v$ and variable $X_v$, in particular
when the nodes get a descriptive name.
Thus, a Bayesian network is fully specified by its set of associated
conditional probability distributions $P(X_v \mid X_{\textrm{pa}(v)})$,
one distribution for each set of values $x_{\textrm{pa}(v)}$
of the variables $X_{\textrm{pa}(v)}$, that take into account
the independence information encoded in the graph $G$.

\subsection{Structural causal models}

Structural causal models are also defined as an acyclic directed graph
as above for Bayesian networks.
\begin{definition}[Structural causal model (SCM)]
  A \emph{structural causal model}  $\mathcal{M} = (G, X, F_V)$ is
  defined as an acyclic directed graph $G = (V,A)$
with a 1--1 correspondence between nodes $V$ and variables
$X_V$, where the set of functions $F_V$ describes \emph{causal relations} and
is defined as follows:
\begin{eqnarray}
  F_V = \{f_v: D(X_{\textrm{pa}(v)}) \to D(X_v)\ \mid v \in V\}
  \label{causalrels}
\end{eqnarray}
\end{definition}

The functions $f_v$ are often represented as equations, not
necessarily linear.

So far, there is no uncertainty involved in SCMs,
as the functions $f_v$ are deterministic, and could also be
represented as a Bayesian network $\mathcal{B}$ where the functions
$f_v$ are represented as a family of deterministic conditional
probability distributions $P(X_v \mid X_{\textrm{pa}(v)})$, i.e.\ with
$P(x_v \mid x_{\textrm{pa}(v)}) \in \{0,1\}$. However, it is
possible, as already demonstrated in Section \ref{sec.motex}, to extend a causal network by adding additional, uncertain
so-called \emph{exogenous} variables $U_W$ (and exogenous nodes
$W = \{n+1,\ldots,m\}$) to an SCM, where $X_V$ are now called
\emph{endogenous} variables, yielding a \emph{probabilistic structural
  causal model} (PSCM) $\mathcal{M} = (G,X_, U,P')$ \cite{zaffalon2020},
with $G = (V \cup W, A)$, $A \subseteq (V \cup W) \times V$, with
only \emph{one} arc $(w,v) \in A$, for each $w \in W$ and $v \in V$, and associated joint probability distribution
\begin{eqnarray}
  P'(X_V,U_W) = \prod_{v \in V} P'(X_v \mid X_{\textrm{pa}(v)})\prod_{w \in W}P'(U_w) .
\end{eqnarray}
Note that the variables $U_w$, $w \in W$, are assumed to be
independent, i.e.
\[P'(U_W) = \prod_{w \in W} P'(U_w)\]
and that parents
$\textrm{pa}(v)$ include endogenous and exogenous nodes. In addition,
the conditional probability distributions
$P'(X_v \mid X_{\textrm{pa}(v)})$ are deterministic and follow the
specification of the associated functions $f_v$ of the SCM. Thus, the
only uncertainty represented in a PSCM is in its associated exogenous
variables $U_w$. We follow Zaffalon et al. \cite{zaffalon2020} in
employing a special notation for parents of endogenous variables with
the exogenous variable removed:
$X_{\underline{\textrm{pa}}(v)} = X_{\textrm{pa}(v)} \backslash U_W$,
for each $v \in V$.

Note that, given the joint probability distribution $P'(X_V,U_W)$,
one can compute the probability distribution $P'(X_V)$ simply
by maginalisation:
\[P'(X_V) = \sum_{u_W \in D(U_W)} P'(X_V,U_W = u_W)\]
and the resulting $P'(X_V)$ will in general be nondeterministic. As
we also have that
\[P'(X_V) = \prod_{v \in V} P'(X_v \mid  X_{\underline{\textrm{pa}}(v)})\]
where $P'(X_v \mid X_{\underline{\textrm{pa}}(v)})$ will not be deterministic
in most cases,
the question is whether it is
possible to recover the conditional probability distributions
$P(X_v \mid X_{\underline{\textrm{pa}}(v)})$ based on the given
$P'(U_W)$ and (deterministic) $P'(X_v \mid X_{\textrm{pa}(v)})$,
such that $P(X_v) = P'(X_v)$.

The solution provided by Zaffalon et al. \cite{zaffalon2020} makes use
of the relationship between a given endogenous variable $X_v$ and associated
exogenous variable $U_w$, with $(w,v) \in A$; the function $f_v$
maps $u_w$ and $x_{\underline{\textrm{pa}}(v)}$ to $x_v$, as follows:
\[f_v : D(U_w) \times D(X_{\underline{\textrm{pa}}(v)}) \to D(X_v)\]
i.e.\ $f_v(u_w,x_{\underline{\textrm{pa}}(v)}) = f_v(x_{{\textrm{pa}}(v)}) = x_v \in D(X_v)$.  As
each tuple of values $(u_w,x_{\underline{\textrm{pa}}(v)})$ is mapped
to a value $x_v$ of $X_v$, each values of $D(X_v)$ covered, the
function $f_v$ is assumed to be surjective, and not
necessarily injective. It may be that
more than one tuple $(u_w,x_{\underline{\textrm{pa}}(v)})$ is mapped
to the same $x_v$, but each element of $D(X_v)$ has at least one
associated tuple from
$D(U_w) \times D(X_{\underline{\textrm{pa}}(v)})$.  The surjectivity
of $f_v$ implies that there exist a right inverse function $f^{-1}_v$,
when restricted to a fixed $\underline{\textrm{pa}}(v)$,
an inverse partial function is obtained:
\[f^{-1}_{v|\underline{\textrm{pa}}(v)}: D(X_v) \to \wp(D(U_w))\]
i.e.\
$f^{-1}_{v|\underline{\textrm{pa}}(v)}(x_v)$
may yield a set of values from $D(U_w)$, which is a consequence of
the surjectivity of $f_v$. If $f_v$ is also injective, then the
range of $f^{-1}_{v|\underline{\textrm{pa}}(v)}$ will consist of
singleton sets, which also may be represented by their elements. The
requirement of surjectivity of $f_{v|\textrm{pa}(v)}$ comes from the requirement that
every element of $D(U_w)$ needs to map to at least one
element of $D(X_v)$. This will allow computing the elements
of $P'(x_v \mid x_{\underline{\textrm{pa}}(v)})$ as follows:
\begin{eqnarray}
  P'(x_v \mid x_{\underline{\textrm{pa}}(v)}) = \sum_{u_w \in f^{-1}_{v|\underline{\textrm{pa}}(v)}(x_v)}P'(U_w = u_w)
  \label{inverse}
\end{eqnarray}
A consequence of this definition is that $P'(x_v \mid x_{\underline{\textrm{pa}}(v)})$ can be approximated by a uniform distribution $P'(U_w)$, similarly
to a Riemann sum (take as many value of $U_w$, and sum the associated probabilities, as needed to obtain $P'(x_v \mid x_{\underline{\textrm{pa}}(v)})$), for appropriate definitions of $f_v$. However, this approach would make $P'(U_w)$, and thus
also $P'(x_v \mid x_{\underline{\textrm{pa}}(v)})$, meaningless. This implies that the meaning of $P'(U_w)$ would become unimportant, which clearly
is unacceptable, as what would be the value of a meaningless
causal network?

Employing the right inverse of the function $f_v$ is not the
only way to compute $P'(X_v \mid x_{\underline{\textrm{pa}}(v)})$,
as given $P'(U_w), w \in W$, we can also determine $P'(X_v \mid x_{\underline{\textrm{pa}}(v)})$
for any value of $x_{\underline{\textrm{pa}}(v)} \in D(X_{\underline{\textrm{pa}}(v)})$ using marginalisation as follows:
\begin{eqnarray}
  P'(X_v \mid x_{\underline{\textrm{pa}}(v)}) = \sum_{u_w \in D(U_w)}
  P'(X_v \mid x_{\underline{\textrm{pa}}(v)}, U_w = u_w) P'(U_w = u) \label{marg0}
\end{eqnarray}
which offers a
different approach, with essentially the
same relationship between $P'(U_w)$ and $P'(X_v \mid x_{\underline{\textrm{pa}}(v)})$. Using either of the two approaches, the issue is whether or not the
BN $\mathcal{B}$ can be transformed to a PSCM $\mathcal{M}$, and back,
yielding the same BN, and vice versa, or
\[P(X_v \mid X_{\textrm{pa}(v)}) = P'(X_v \mid X_{\underline{\textrm{pa}}(v)})\] for each $v \in V$, and thus
whether $P(X_V) = P'(X_V)$.
\begin{example}
\label{ex1}
  \begin{figure}
    \centering
 \subcaptionbox{\label{a}}
 {\includegraphics[width=0.42\linewidth]{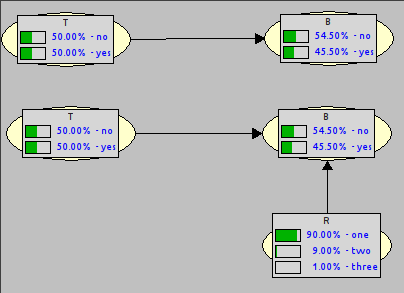}} \ \
 \subcaptionbox{\label{b}}
 {\includegraphics[width=0.4\linewidth]{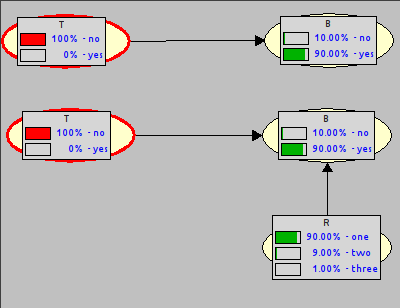}}
 \caption{Two different Bayesian network representations with two conditional
   distributions and more values for $R$, before and after entering
   an intervention $T$; (a): priors; (b): posteriors.}
  \label{two}
\end{figure}
The conditional probabilities of the top BN of Figure \ref{two}
are defined as follows:
\begin{eqnarray*}
  P(B = n \mid T = n) = 0.1 \\
  P(B = n \mid T = y) = 0.99
\end{eqnarray*}
Thus, in contrast to the motivating example of Section \ref{sec.motex},
these two probabilities no longer add up to 1, implying
that the two probability distributions can no longer be represented
by a single unconditional binary distribution, $P(R)$ in this case.
With a probability distribution of $R$ consisting of three different
values:
\begin{eqnarray*}
  P(R = \textrm{one}) = 0.9 \\
  P(R = \textrm{two}) = 0.09 \\
  P(R = \textrm{three}) = 0.01
\end{eqnarray*}
where intentionally $0.9 +0.09 = 0.99$ and $0.09 + 0.01 = 0.1$,
and the associated family of deterministic probability distributions:
\begin{eqnarray*}
  P(B = n \mid T = y, R =  \textrm{one}) = 1 \\
  P(B = n \mid T = y, R =  \textrm{two}) = 1 \\
  P(B = n \mid T = y, R = \textrm{three}) = 0 \\
  P(B = n \mid T = n, R =  \textrm{one}) = 0 \\
  P(B = n \mid T = n, R =  \textrm{two}) = 1 \\
  P(B = n \mid T = n, R = \textrm{three}) = 1
\end{eqnarray*}
using equation (\ref{marg}) it is possible to select the first two
probabilities of $P(R)$ to obtain $P(B = n \mid T= y)$ and the last two
probabilities to obtain $P(B = n \mid T = n)$, as shown
on the right-hand side of Figure \ref{two}. As a consequence,
the two BNs in Figure \ref{two} are equivalent, and after
marginalisation out $R$, identical.

Corresponding to the deterministic conditional distributions
above, is the following definition of $f_B$:
\[f_B(T,R) = (T = 0)\cdot(R < 2) + T\cdot(R > 2) = B\] e.g.\
$f_B(1,2) = 0 = B$ (not blind, i.e.\ $B = n$).  The computations by
using the marginalisation rule (equation (\ref{marg})) are very
similar to the use of the inverse partial function
$f^{-1}_{v|\underline{\textrm{pa}}(v)}$ with two values (0.9 and 0.09)
from $D(R)$ associated to $B = n \mid T=y$ and two other values (0.09
and 0.01) associated to $B = n \mid T = n$.  Note that the value 0.09
is used twice in the computations, one time to obtain
$P(B = n \mid T = y)$ and another time to obtain
$P(B = n \mid T = n)$. The inverse of the partial function
$f_{B|T=1}$, denoted $f_{B|T=y}$, i.e.\ $f^{-1}_{B|T=y}$, would give
the result
$f^{-1}_{B|T=y}(0) = \{1,2\} = \{\textrm{one},\textrm{two}\}$. The
alternative computation based on Equation (\ref{inverse}) would yield
\[\sum_{r \in f^{-1}_{B|T=y}(0)}P(R = r) = 0.9 +0.09 = 0.99 = P(B = n \mid T = y)\]
\end{example}
Thus, by decomposing the conditional probabilities
$P(X_v \mid X_{\underline{\textrm{pa}(v)}})$ in a proper way giving
rise to $P(U_w)$, and the right definition of the deterministic
conditional probabilities $P(x_v \mid x_{\underline{\textrm{pa}(v)}})$,
it is possible to recover a Bayesian network in this way.
The reader
has probably already noticed that the definition of $P(R)$
in Example \ref{ex1} is not the only possibility to recover
the Bayesian network, which will be demonstrated next.
\begin{example}
  \label{ex2}
  Reconsider Example \ref{ex1} with the given family of conditional
  probability distributions $P(B \mid T)$. As we wish to
  recover the probabilities 0.99 and 0.1 from $P(R)$, a simple other
  definition is obtained by realising that 0.99 is obtained
  by the sum $0.89 + 0.1$ This implies that the following
\begin{eqnarray*}
  P(R = \textrm{one}) = 0.89 \\
  P(R = \textrm{two}) = 0.1 \\
  P(R = \textrm{three}) = 0.01
\end{eqnarray*}
may also be a suitable definition to recover the Bayesian network from
a PSCM. However, when accepting this definition, the deterministic probability
distributions also needs to be modified, as follows:
\begin{eqnarray*}
  P(B = n \mid T = y, R =  \textrm{one}) = 1 \\
  P(B = n \mid T = y, R =  \textrm{two}) = 1 \\
  P(B = n \mid T = y, R = \textrm{three}) = 0 \\
  P(B = n \mid T = n, R =  \textrm{one}) = 0 \\
  P(B = n \mid T = n, R =  \textrm{two}) = 1 \\
  P(B = n \mid T = n, R = \textrm{three}) = 0
\end{eqnarray*}
This leads for example to:
\[P(B = n \mid T = y) = \sum_r P(B = n \mid T = y, R = r)P(R = r)
  = 1 \cdot 0.89 + 1 \cdot 0.1 + 0 \cdot 0.01 = 0.99\]
After redefining the functions $f_{B|T}$, the same result
would be obtained using Equation (\ref{inverse}).
\end{example}

\section{Relationship between exogenous and endogenous variables}
\label{sec.exo}
\subsection{How many values do we need?}

Based on the discussion above, an obvious question is how Equations
(\ref{inverse}) and (\ref{marg0}) would work if all variables in a
PSCM, including the exogenous variables $U_w$, $w \in W$, were assumed
to be binary, or whether a Bernoulli $P(U_w)$ would be able to capture
the uncertainty in the original BN (which may or may not have been
learnt from data).  This is illustrated by means of the following
example.
\begin{example}
\label{ex4}
Returning to the motivating example in Section \ref{sec.motex} and Figure
\ref{one}, application of Equation (\ref{marg0}) is easy in this case,
as the set $\wp(D(U_w)) = \wp(D(R))$ only contains two singleton sets
$\{y\},\{n\}$ with the associated probabilities copied once from
$P(B = n \mid T = y)$ and once from $P(B = n \mid T = n)$. Equation
(\ref{inverse}) will give the same result.

For both conditional probability distributions we have the special case that
$P(B = x \mid T = n) = 1 - P(B = x \mid T = y)$ for
$x \in \{n,y\}$. This special condition is needed, as $P(R)$ only
gives two probabilities, namely $P(R = y)$ and $P(R = n)$ with
$P(R = y) = 1 - P(R = n)$. As $P(R)$ is used to render the conditional
probabilities $P(B \mid T, R)$ uncertain, not more than two
probabilities can be exploited, whereas four are needed in general
when the two conditional probabilities do not add to 1.
\end{example}
The example given above shows that the approach provided by Zaffalon
et al.\ \cite{zaffalon2020} requires knowledge about the distributions
of both endogenous and exogenous variables before one can start using
Equations (\ref{inverse}) and (\ref{marg0}).  Although Example
\ref{ex1} indicates that the method will not work for arbitrary
conditional distributions and a binary distribution, the approach
supports defining non-binary distributions of the exogenous
variable. One needs first to fix a domain $D(U_w)$ with sufficient
number of values.  By constituting subsets of values of $D(U_w)$,
probabilities can be combined in such way, by summing over the
probabilities associated with a subset values, that the conditional
probabilities $P(x_v \mid x_{\underline{\textrm{pa}}(v)})$ can be
recovered. This process is discussed in the following section.

\subsection{Formulation of the general case}
\label{bntocn}

\begin{definition}[Partition of domain]
  Let $X$ be a variable, with domain $D(X)$ and
  discrete probability distribution $P(X)$, then $\Pi(X)$ denotes
  a \emph{partition of domain} $D(X)$ consisting of subsets  $S \subseteq D(X)$,
  such that $\bigcup_{S \in \Pi(X)} S = D(X)$ --- a partition is exhaustive ---, and each $S, S' \in \Pi(X)$,
  $S \neq S'$, are disjoint.
\end{definition}
This concept of partition of a domain is needed for extracting probabilistic
information from the exogenous variables, and will be used to
obtain matching probabilistic information for the endogenous variables.
\begin{definition}[Probability assignment]
  Let $U$ be an exogenous variable, $P(U)$ its associated
  discrete probability distribution, and $\Pi(U) = \{S_1,\ldots,S_m\}$ a partition
  of $D(U)$. Let $P(x_v \mid x_{\underline{\textrm{pa}}(v)})$ be the conditional
  probability with $U \in X_{{\textrm{pa}}(v)}$ and
  $P(x_v \mid X_{\underline{\textrm{pa}}(v)}) = \{P(x_v \mid x^i_{\underline{\textrm{pa}}(v)}) \mid i=1,\ldots,m\}$.
  A \emph{probability assignment} is then defined as follows:
  \[P(x_v \mid x^i_{\underline{\textrm{pa}}(v)}) = \sum_{u \in S_i} P(U = u).\]
\end{definition}
In Section \ref{sec.motex} we have discussed an example, where the
partition of $D(R)$ consisted of two singleton sets. However, any partition
can be used, not just singleton sets. In any case, the following (obvious) proposition holds:
\begin{proposition}\label{propassign}
  Let $P(x_v \mid X_{\underline{\textrm{pa}}(v)})$ be the result of a probability assignment,
  then it holds that:
  \[\sum_{i=1}^m P(x_v \mid x^i_{\underline{\textrm{pa}}(v)}) = 1\]
\end{proposition}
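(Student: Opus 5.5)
The plan is to unfold the definition of a probability assignment and then use the two defining properties of a partition together with the normalisation of $P(U)$. By the definition, the assignment is built from a partition $\Pi(U) = \{S_1,\ldots,S_m\}$ of $D(U)$. For each $i = 1,\ldots,m$ it sets $P(x_v \mid x^i_{\underline{\textrm{pa}}(v)}) = \sum_{u \in S_i} P(U = u)$. Substituting this into the left-hand side of the statement gives
\[\sum_{i=1}^m P(x_v \mid x^i_{\underline{\textrm{pa}}(v)}) = \sum_{i=1}^m \sum_{u \in S_i} P(U = u).\]

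The key step is to turn this double sum into a single sum over $D(U)$. Because the blocks $S_i$ are pairwise disjoint, every $u$ lies in at most one $S_i$, so no probability $P(U = u)$ is counted twice. Because the partition is exhaustive, $\bigcup_i S_i = D(U)$, so every $u \in D(U)$ lies in at least one $S_i$. Together these say that each $u \in D(U)$ appears exactly once in the double sum. Since $D(U)$ is finite (the distribution is discrete and the blocks are finitely many), the sum may be reordered, which gives
\[\sum_{i=1}^m \sum_{u \in S_i} P(U = u) = \sum_{u \in D(U)} P(U = u) = 1,\]
where the last equality holds because $P(U)$ is a probability distribution. (If $D(U)$ were countably infinite, the same rearrangement would still be valid because all terms are nonnegative.)

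There is no real technical obstacle, since the argument is essentially a single rearrangement of a sum. The only point needing care is interpretive: the index $i$ in the statement must range over exactly the $m$ blocks of the partition, with each configuration $x^i_{\underline{\textrm{pa}}(v)}$ assigned one block, as the definition stipulates. Once that correspondence is fixed, the result follows directly from disjointness, exhaustiveness and normalisation.
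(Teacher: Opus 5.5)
Your proof is correct and takes the same route as the paper: substitute the definition of the probability assignment, use that the blocks of $\Pi(U)$ are disjoint and exhaustive to collapse $\sum_{i}\sum_{u\in S_i}P(U=u)$ into $\sum_{u\in D(U)}P(U=u)$, and conclude by normalisation. The only difference is that you spell out the disjointness and exhaustiveness step, which the paper's one-line proof leaves implicit.
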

\begin{proof}
  As $\sum_{u \in D(U)} P(U = u) = 1$, it follows that $\sum_{S \in \Pi(U)}\sum_{u \in S}P(U = u) = 1$.
\end{proof}
The definition of a probability assignment also supports using different permutations of
probabilities, which essentially yield different probability distributions $P(U)$,
but by putting the same combinations of probabilities into a partition, the same family
of conditional probability distributions can be obtained.

However, as was demonstrated by Examples \ref{ex1} and \ref{ex2}, it is also
possible to form clusters of probabilities from $P(U)$ to define
conditional probabilities, violating the requirement of a partition
that its members are disjoint and exhaustive.
\begin{example}
  The probability distributions $P(R)$ defined in Examples \ref{ex1}
  and \ref{ex2} yield sets of clusters
  $\{\{0.9, 0.09\}, \{0.09, 0.01\}\}$ and
  $\{\{0.89, 0.1\}, \{0.1\}\}$, resp., which do not constitute a
  partition (since the first and second example have overlapping sets,
  where in addition for the second example the sets are not exhaustive)
  and in both cases Proposition \ref{propassign} does not apply
  because the partition requirement fails to hold. 
\end{example}
So far it is assumed that there is no knowledge
represented in a given BN to build the causal relationships
(\ref{causalrels}):
\begin{eqnarray*}
F_V = \{f_v: D(X_{\textrm{pa}(v)}) \to D(X_v)\ \mid v \in V\}
\end{eqnarray*}
of a PSCM based on a given BN. In reality, however, some relationships
between parent variables and a child variable are so strong (reflected
by a high probability) that one may expect that they carry over to the
causal functional relationship.  Unfortunately, the exogenous
variables throw a spanner in the works. As its associated probability
distribution models the uncertainty in the causal relationship, as we have
seen above, usually several probabilities from this distribution are
needed to (re)build the conditional probability distributions of a
BN. By reordering a distribution of an exogenous variable (for example
from high to low probability), one can confidently say that the strong
relationships must come from the high end of the distribution. However,
not much more can be concluded from this observation.

\subsection{Offers adding arcs an alternative method?}

As has been demonstrated in the previous section, the probability
distribution of the exogenous variables $P(U_W)$ can be used to add
uncertainty to the functional relationship between causes and the
effects expressed by the functions $f_v$, but without having a clear
meaning. It was also shown that different probability distributions
$P(U_W)$ can be used to transform a BN to a PSCM and vise versa. Thus,
as a preliminary conclusion it can be stated that a relationship
between BNs and PSCMs exists, although not necessarily unique.

This conclusion raises the question whether there are alternative ways
to transform a BN into a PSCM.
\begin{example}
  Reconsider the BN from Example \ref{ex1}.  One potential alternative
  solution is to add an arc from vertex $T$ to $R$.  This is done in
  Figure \ref{twodist}.
\begin{figure}[h]
  \centering
\subcaptionbox{\label{labaa}}
{\includegraphics[width=0.4\linewidth]{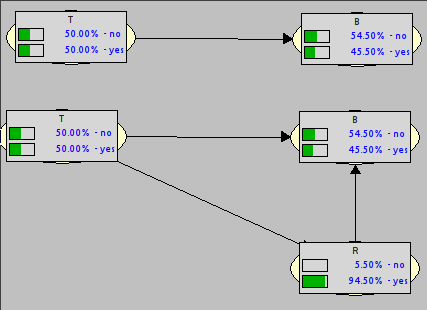}}
\ \
\subcaptionbox{\label{labbb}}
 {\includegraphics[width=0.4\linewidth]{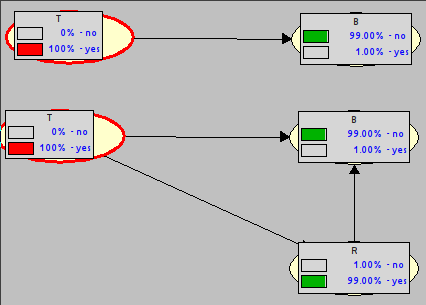}}
  \caption{Two different Bayesian network representations of the same distribution with one extra arc from $T$ to $R$ with (a): priors; (b) posteriors.}
  \label{twodist}
\end{figure}

To be able to represent the same distribution using a deterministic
causal network, i.e.\ a PSCM, we draw an arc from $T$ to $R$ and
define $P(R \mid T)$ as follows:
\begin{eqnarray}
  P(R = n \mid T = n) = 0.1 \\
  P(R = n \mid T = y) = 0.01
\end{eqnarray}
Using
\begin{equation}
  P(B \mid T) = \sum_r P(B, R = r\mid T) = \sum_r P(B \mid T, R = r) P(R = r \mid T) \label{marg2}
\end{equation}
we are able to compute the distribution after conditioning on values
for $T$, resulting in the same a posterior
distribution, as shown in Figure \ref{twodist}. The conditional
probability distributions $P(B \mid T,R)$ are the same as
for Example \ref{ex1}:
\begin{eqnarray*}
  P(B = n \mid T = y, R = n) = 0 \\
  P(B = n \mid T = y, R = y) = 1 \\
  P(B = n \mid T = n, R = n) = 1 \\
  P(B = n \mid T = n, R = y) = 0
\end{eqnarray*}
\end{example}

This shows that by adding random variables to a Bayesian network after
modifying a set of uncertain conditional distributions to
deterministic conditional distributions, it is possible to represent
the same conditional distribution, $P(B \mid T)$ in our example, by
adding an exogenous variable and adding an arc from the conditioning
variable to the new, exogenous variable. Hence, an alternative method
exists, but it is still very similar to the original method
introduced earlier, and actually adds extra complexity to the method
without offering a clear advantage. Because of this, the method will
not be pursued any further.

\section{Transforming a BN to a PSCM}
\label{sec.trans}
The preceding section was primary meant to provide a basic
understanding of the potential relationships between BNs and PSCMs. In
the present section we develop a deeper analysis of the relationship,
and in particular we formulate an answer to the following
questions: (1) is there a procedure for turning a Bayesian network
into a PSCM, and (2) would that yield a causal network that in some
sense is equivalent to the original BN? The way these questions are
approached is by turning to linear algebra, and in particular linear
programming. It will become clear in the following why this is a
suitable approach.

Assuming that we have a family of conditional deterministic
probability distributions
$P'(X_v \mid x_{\underline{\textrm{pa}}(v)}, U_w = u_w)$, and a family
of probability distributions
$P'(x_v \mid x_{\underline{\textrm{pa}}(v)})$, we will solve for
$P'(U_w = u_w)$. This process is elaborated in the subsequent
subsections, where first some common linear algebra concepts
are reiterated.

\subsection{Linear algebra notation and concepts}
\label{sec.linalg}

Work on algebraic formulation of probability theory started with
George Boole \cite{boole1854}, which was later picked up again by
Castillo et al.\ \cite{castillo1997}. In the present context,
linear algebra offers a good starting point for analysis.

Only a minimal amount of linear-algebra notation and concepts will be
repeated here, based on the book by Strang \cite{strang1980}, 
to avoid misunderstanding in the notation used in the following and
to refresh the memory of some readers.

A \emph{vector space} $V$ consists of vectors that are closed under
addition and scalar multiplication; the zero vector is always part of
a vector space. If a vector space $V$ consists of all linear
combinations of vectors $v_1,v_2,\ldots,v_n$, i.e., $v \in V$ if
$v = \sum_{i=1}^n c_i v_i$, with $c_i \neq 0$, $i = 1,\ldots,n$,
$n \geq 0$, then it is said that these vectors \emph{span} the vector
space, denoted $\mathcal{L}(v_1,\ldots,v_n)$. A \emph{basis} for a
vector space $V$ is a set of vectors $\{v_1,\ldots,v_n\}$ that is
\emph{linearly independent}, i.e.\ $\sum_{i=1}^n c_i v_i \neq 0$
unless $c_1 = c_2 = \cdots = c_n = 0$, and that span the space.

We use upper-case letters, such as $A$, for $m \times n$
\emph{matrices}, with $m \times n$ its \emph{dimension}, lower-cases
letters for \emph{column vectors}, such as $x$, whereas \emph{row
  vectors} are denoted by the \emph{transpose} of the column-vector:
$x^T$. In addition, use will be made of the \emph{column space} of a
matrix $A$, denoted $\mathcal{R}(A)$, $\mathcal{R}$ stands for
\emph{range}, where $A$ is interpreted as a linear transformation
$T_A: \mathbb{R}^n \to \mathbb{R}^m$. A transformation $T_A(x) = b$,
with $T_A$ a linear transformation based on $A$, thus corresponds to a
set of linear equations $A x = b$, with $x \in \mathbb{R}^n$ and
$b \in \mathbb{R}^m$. The maximal number of elements of the subset of
columns of a matrix $A$ that span $\mathcal{R}(A)$ is called the
\emph{rank} of $A$. It happens that the rank of the column space of
$A$ is equal to the rank of the row space of $A$, i.e.,
$\textrm{dim} \; \mathcal{R}(A) = \textrm{dim} \; \mathcal{R}(A^T) =
r(A)$, despite the fact that the column space $\mathcal{R}(A)$ and row
space $\mathcal{R}(A^T)$ are usually different.  The existence of a
solution $x$ of the equation $A x = b$ means that $b$ is in the column
space $\mathcal{R}(A)$ spanned by the column vectors of $A$.

\subsection{Linear algebra formulation}

Equation (\ref{marg0}) is a linear equation, with
$P'(X_v \mid x_{\underline{\textrm{pa}}(v)}, U_w = u_w) \in \{0,1\}$,
whereas $P'(U_w = u_w)$, for brevity represented as $u_w$, can be
taken as a vector:
\begin{eqnarray*}
  u = \left[\begin{array}{c}
          P'(U_w = u_1) \\
          P'(U_w = u_2) \\
          \vdots \\
          P'(U_w = u_{n})
        \end{array}
  \right] =
  \left[\begin{array}{c}
          u_1 \\
          u_2 \\
          \vdots \\
          u_{n}
        \end{array}
  \right]
\end{eqnarray*}
The Boolean $P'(X_v \mid X_{\underline{\textrm{pa}}(v)}=  x_i, U_w = u_w)$ can be
represented as an element of a matrix $A$, with
\[a_{ij} = P'(X_v = x_v \mid X_{\underline{\textrm{pa}}(v)}= x_i, U_w = u_j)\]
Combined,  we get the following inhomogeneous linear equation:
\[A u = \left[\begin{array}{cccc}
                    a_{1,1} \!& \!a_{1,2} \!& \!\cdots \!& a_{1,n} \\
                    a_{2,1} \!& \!a_{2,2} \!& \!\cdots \!& a_{2,n} \\
                    \vdots \!&\!\vdots \!&\! \vdots \!&\! \vdots \\
                    a_{m-1,1} \!&\! a_{m-1,2} \!&\! \cdots \!&\! a_{m-1,n} \\
                    1  \!&\!   1  \!&\!  \cdots \!  &\! 1
                  \end{array}\right]
                \left[\begin{array}{c}
          u_1 \\
          u_2 \\
          \vdots \\
          u_{n}
        \end{array}
      \right]
      =b = \left[\begin{array}{c}
                   b_1 \\
                   b_2 \\
                   \vdots \\
                   b_{m-1} \\
                   b_{m}
                 \end{array}\right]
=
               \left[\begin{array}{c}
                   P'(x_v \mid x_{\underline{\textrm{pa}}(v)} = x_1) \\
                   P'(x_v \mid x_{\underline{\textrm{pa}}(v)} = x_2) \\
                   \vdots \\
                       P'(x_v \mid x_{\underline{\textrm{pa}}(v)} = x_{m-1}) \\
                       1
                   \end{array}\right]                   
\]
The last row of $A$, together with $b_m$, which consists of 1s only,
follows from the fact that $u$ is a probability vector:
$\sum_{i =1}^n u_i = 1$. From now on, it is assumed that vector $b$
contains a last, $m$th element equal to 1.

In solving the linear equation $Au = b$, with $m\times n$ matrix $A$,
and both $A$ and $b$ known, there is \emph{at least one solution} $u$
if and only if the number of rows $m$ is equal to the rank of the
matrix $A$, $r(A)$, i.e. $m = r(A)$ \cite{strang1980}. Subsequently,
it is assumed that
\[P(X_v \mid x_{\textrm{pa}(v)}) = P'(X_v \mid x_{\textrm{pa}(v)})\]
for each $x_{\textrm{pa}(v)}$ and $v \in V$, as this will give
the required relationship between PSCMs and BNs.

Now, in contrast to what is common in solving linear equations, in the
present case both $A$ and $u$ are \emph{unknown}, and only $b$ is
known.  That clearly indicates that it may not be easy to come up with
a solution. What helps, however, is that we know that the matrix $A$ is Boolean
(or binary), $a_{ij} \in \{0,1\}$, and that we need at least $r(A)$ linear
independent column vectors $a$ from $A$ that span the
column space of $A$. As $A$ is unknown, we could generate those linear independent column
vectors based on a fixed dimension and potentially based on
the original conditional probabilities $P(x_v \mid x_{\textrm{pa}(v)})$,
and solve for $u$. This generation process is clearly
the most straightforward way to tackle this problem.

\subsection{Towards linear programming}

The formulation above is incomplete, as a solution $u$ may include
negative elements and elements bigger than 1, even though the sum of
the components of $u$ is guaranteed to sum to 1. We can solve this
either by linear programming (LP for short), where most of the
constraints are equality constraints, or by solving the set of linear
equations by standard linear algebra, and if a solution is found,
checking whether it satisfies the constraints of a probability
distribution. The only advantage of linear programming in this case is
that its feasible solutions are guaranteed to satisfy the inequality
constraints as well. As the simplest solution is that of solving a set
of linear equations, we discuss that particular approach first.

In order to handle the inequalities with respect to $u$ mentioned above, it is necessary to include extra constraints to the linear equation
$A u = b$. We need to add the constraints that $0 \leq u_i \leq 1$, for each
$i = 1,\ldots,n$. This is done by adding a kind of \emph{slack variables} $w_i$ to the
probability vector $u$ expressing the situation that $u_i \leq 1$, for $i=1,\ldots,n$, i.e.\
\[w_i = 1 - u_i \Leftrightarrow u_i + w_i = 1\]
with $u_i, w_i \geq 0$, for $i= 1,\ldots,n$. The extended linear
equation $A' x = b'$ that is obtained, now has one extra
constraint that $x = [u \;\;w]^T \geq 0$:
\[A' u' = \left[\begin{array}{cc}
                   A & O \\
                   I & I
                  \end{array}\right]
                \left[\begin{array}{c}
                        u \\
                        w
                      \end{array}\right] = \left[\begin{array}{c}b \\ e\end{array}\right] = b'
\]
with $I$ the $n \times n$ identity matrix, $e = [1 \; 1\; \cdots \; 1]^T$ the
column vector with 1s, and $O$ the $m \times n$
null matrix (with zeros). Recall that the last row of $A$ and also of $O$
enforce that $u$ is a probability vector.

Next we demonstrate how to use this approach to transform
a simple Bayesian network, before moving on to other issues
such as the number of elements of probability
vector needed to be able to perform the transformation.
\begin{example}
\label{ex3}
Reconsider Example \ref{ex1} and Figure \ref{two}.
Formulated as a linear equation with constraints, the probabilities
presented in the example can be represented as follows:
\[A'\left[\begin{array}{c}u \\ w\end{array}\right] = \left[\begin{array}{cc}
                   A & O \\
                   I & I
        \end{array}\right]\left[\begin{array}{c}u \\ w\end{array}\right]
      = \left[\begin{array}{cccrrr}
          1 & 1 & 0 & 0 & 0 & 0 \\
          0 & 1 & 1 & 0  & 0 & 0 \\
          1 & 1 & 1 & 0  & 0 & 0 \\                
          1 & 0 & 0 & 1  & 0 & 0 \\
          0 & 1 & 0 & 0  & 1 & 0 \\
          0 & 0 & 1 & 0  & 0 & 1
          \end{array}\right]
        \left[\begin{array}{c}u \\ w\end{array}\right] =
        b' = \left[\begin{array}{c}0.99 \\ 0.1 \\ 1 \\ 1\\ 1 \\ 1\end{array}\right]
\]
with solution vector $x = [u \; w]^T \geq 0$. A unique solution is
$[u \; w]^T = [0.9 \; 0.09 \; 0.01 \; 0.1 \; 0.91 \; 0.99]^T$, where
$u$ are variables and $w$ the constraint (slack) variables. In this
case, we have that the rank of $A'$, $r(A') = 6$. As $A'$ is a
$6 \times 6$ square matrix, the matrix is invertible if it is
non-singular (the right inverse is equal to the left inverse), and
that a solution $[u \; w]^T$ exists for any vector $b'$.  Observe that
the vectors $u$ and $w$ satisfy the specified constraint that
$u_i + w_i = 1$, $i \leq i \leq 3$.  In this case, the corresponding
probability distribution of $R$ consisting of three different values
is:
\begin{eqnarray*}
  P(R = \textrm{one}) = 0.9 \\
  P(R = \textrm{two}) = 0.09 \\
  P(R = \textrm{three}) = 0.01
\end{eqnarray*}
This unique solution is also a feasible solution obtained by
linear programming.

Next, consider the matrix $A''$:
\[A'' = \left[\begin{array}{cccrrr}
          0 & 1 & 0 & 0 & 0 & 0 \\
          0 & 1 & 1 & 0  & 0 & 0 \\
          1 & 1 & 1 & 0  & 0 & 0 \\                
          1 & 0 & 0 & 1  & 0 & 0 \\
          0 & 1 & 0 & 0  & 1 & 0 \\
          0 & 0 & 1 & 0  & 0 & 1             
        \end{array}\right]
\]
The only difference with $A'$ is in component $a_{11}$. In this case,
solving the corresponding set of linear equations yields:
$[u \; w]^T = [0.9 \; 0.99 \; \mbox{$-0.89$} \; 0.1 \; 0.01 \; 1.89]^T$, where
the probability constraints are being violated. Linear programming will
simply notice that a feasible solution does not exist.
\end{example}
The example above illustrates that the matrix $A$ (the submatrix of $A'$) does
nothing else than selecting elements from the vector $u$ such that
the sum of those elements equals an element $b'_i$ (with $i=1,2$ in the example).

The linear programming approach offers something extra in comparison
to the linear algebra approach: its solution $x$ optimises
(minimises) a linear function $c^T x$, called the \emph{objective
  function}. This function can be used to formulate extra constraints
on the solution vector $x$. However, it is not immediately clear in
what way to use the objective function as part of
retrieving the probability distribution of the original
Bayesian network.

\subsection{Existence and uniqueness of solutions of the system of \\ linear inequalities}
\label{sec.exist}

Taking the linear programming formulation as a start, i.e.
\[A'\left[\begin{array}{c}u \\ w\end{array}\right] = \left[\begin{array}{cc}
                   A & O \\
                   I & I
           \end{array}\right]\left[\begin{array}{c}u \\ w\end{array}\right]
         = b'\]
with $[u \; w]^T \geq 0$,
we simplify the mathematical formulation for ease of exposition, by
rewriting the above equation to
\[A' x = b'\] with $x = [u \; w]^T \geq 0$, and $\min_{x} c^T x$, the
objective function that is being minimised.  Thus, from now on, $A'$ is taken as an
$m \times n$ matrix, $x$ as an $n$-dimensional column vector, and $b'$
as an $m$-dimensional column vector. Whether $A'$ is square ($m = n$),
or rectangular with either $m > n$ or $m < n$, is fully determined by
how many components of $x$ (or actually subvector $u$) are summed
together to obtain components of $b'$. The general rule is that the
larger $n$ is, the easier it becomes to reconstruct the original
causal Bayesian network as there will be more freedom in selecting
probabilities. In other words, how many probabilities $P(U_w = u_w)$
are selected by the deterministic conditional probabilities
$P'(X_v = x_v \mid x_{\underline{\textrm{pa}}(v)}, U_w = u_w)$, to
obtain $P'(X_v = x_v \mid x_{\underline{\textrm{pa}}(v)})$, i.e.,
$b'$, is governed by the matrix $A'$.

The rank and the $m \times n$ dimension of the matrix $A'$ allow distinguishing
various cases, where again we follow the book by Strang \cite{strang1980}:
\begin{itemize}
\item ($m = n$) The problem is to find a Boolean submatrix $A$ of
  $A'$, that, when embedded in $A'$, is invertible, yielding the
  elements of $[u\; w]^T$ from the multiplication ${A'}^{-1} b'$. If
  $A'$ is \emph{invertible}, and thus a bijective linear transformation,
  then we always have a unique solution;
  $x = {A'}^{-1} b'$, and the objective function $c^T x$ used in linear
  programming simply yields a value that is also unique. The only use
  of linear programming here, in contrast to the standard linear algebra of
  equations, is to ensure that $u$ is a probability vector, and
  to disregard solutions that fail to meet that requirement.
  The rank of the matrix $A'$ is equal to $r(A')= r = m = n$, and we have
  only basic variables and no free variables. However, if $r(A') < n$,
  then there are free variables. This situation is similar to
  what is discussed for the cases with $m \neq n$ below.
\item ($m > n$) The system $A' x = b'$ has \emph{at most one} solution
  if and only if the $n$ columns of $A'$ are linearly independent,
  i.e., $r(A') = r = n$. In this case, $A'$ is an injective linear
  transformation, and there exists an $n \times m$ \emph{left-inverse}
  matrix $B$, such that $B A' = I_n$, with
  $B = ({A'}^T {A'})^{-1} {A'}^T$.  However, if $r(A') = r < n$, then
  there are $r$ basic variables and $n - r$ free variables. The free
  variables will allow to generate an infinite number of solutions. It
  is possible to select one \emph{particular} solution by assuming
  that all the free variables are zero. The objective function of
  linear programming is then based on the basic variables. However, we
  may also exchange basic and free variables, as is common in linear
  programming, to move to a different particular solution. If
  $r(A') = n$, an LP solution will be the same as the one obtained by
  the linear algebra approach.
\item ($m < n$) The system $A' x = b'$ has \emph{at least one}
  solution if the columns of $A'$ span $\mathbb{R}^m$, i.e.,
  $r(A') = r = m$. There exists an $n \times m$ right-inverse $C$ such
  that $A' C = I_m$, with $C = {A'}^T (A' {A'}^T)^{-1}$. If
  $r(A') < m$, there are $m - r$ constraints on $b'$ for $A' x = b'$
  to be solvable and there are $r$ basic variables and $n - r$ free
  variables.  However, as $A'$ is a surjective linear transformation,
  the solution $x$ obtained in this way is just one of the possible
  solutions (see Examples \ref{ex2} for
  an actual case), and may differ from the LP solution.
\end{itemize}
We give a number of examples to illustrate the theory presented above.
\begin{example}
We adopt the same order of cases distinguished above.
 \begin{itemize}
 \item ($m = n$). For the $m \times n$ matrix $A'$ it holds that $r(A') = m = n$, i.e.,
   the matrix is non-singular and an inverse ${A'}^{-1}$ exists. Here $m = n = 6 = r(A')$,
   and
   \[b' = [0.99\; 0.1\; 1\; 1\; 1\; 1]^T\] 
   \[A' = \left[\begin{array}{cccccc}
 1 & 1 & 0 & 0 & 0 & 0 \\
 1& 0& 0& 0& 0& 0 \\
 1& 1& 1& 0& 0& 0 \\
 1& 0& 0& 1& 0& 0 \\
 0& 1& 0& 0& 1& 0 \\
 0& 0& 1& 0& 0& 1
           \end{array}\right]
       \]
resulting in the LP solution:
\[x = [0.1\;  0.89\; 0.01\; 0.9\;  0.11\; 0.99]^T\]
\[{A'}^{-1} = \left[\begin{array}{rrrrrr}
 0&  1&  0& -0&  0& -0 \\
 1& -1&  0& -0&  0& -0 \\
 -1&  0&  1& -0&  0& -0 \\
 0& -1&  0&  1&  0& -0 \\
 -1&  1&  0&  0&  1& -0 \\
                      1&  0& -1&  0&  0&  1
\end{array}\right]                                         
\]
Clearly, ${A'}^{-1} b' =  x = [0.1\; 0.89\; 0.01\; 0.9\; 0.11\; 0.99]^T$, which is exactly the
same as the LP solution mentioned above.
\item ($m > n$). The $m \times n$ matrix $A'$ has an $n \times m$
  left inverse $B$ as $m > n$. In the example below, $m = 7$, $n = 4 = r(A')$, and
\[b' = [0.1\; 0.9\; 0.1\; 0.9\; 1\; 1\; 1]\]
\[A' = \left[\begin{array}{cccccc}
 1& 0& 0& 0 \\
 0& 1& 0& 0 \\
 1& 0& 0& 0 \\
 0& 1& 0& 0 \\
 1& 1& 0& 0 \\
 1& 0& 1& 0 \\
 0& 1& 0& 1
\end{array}\right]\]
with LP solution equal to $x = [0.1\; 0.9\; 0.9\; 0.1]^T$.  The left inverse $B$
is equal to:
\[B = \left[\begin{array}{rrrrrrr}
  0.375 & -0.125 &  0.375 & -0.125 &  0.25 & 0 & 0\\
 -0.125 &  0.375 & -0.125 &  0.375 &  0.25 & 0 & 0\\
 -0.375 &  0.125 & -0.375 &  0.125 & -0.25 & 1 & 0\\
  0.125 & -0.375 & 0.125  & -0.375 & -0.25 & 0 & 1
\end{array}\right]\]
with $B\, b' = x = [0.1\; 0.9\; 0.9\; 0.1]^T$.
\item ($m < n$). There is only an $n \times m$ right inverse $C$  and $r(A') = m$. Here,
  $ m = 7 = r(A')$ and $n = 8$, with
\[b' = [0.99\; 0.1\; 1\, 1\, 1\, 1\, 1]^T\]
and
 \[A' = \left[\begin{array}{cccccccc}
 0& 1& 1& 1& 0& 0& 0& 0\\
 0& 0& 0& 1& 0& 0& 0& 0\\
 1& 1& 1& 1& 0& 0& 0& 0\\
 1& 0& 0& 0& 1& 0& 0& 0\\
 0& 1& 0& 0& 0& 1& 0& 0\\
 0& 0& 1& 0& 0& 0& 1& 0\\
 0& 0& 0& 1& 0& 0& 0& 1    
\end{array}\right]\]
with LP solution equal to $x = [0.01\;  0.89\;  0\, 0.1\; 0.99\; 0.11\; 1\; 0.9]^T$.
The right inverse $C$ is equal to:
\[C = \left[\begin{array}{rrrrrrr}
 -1&    0&    1&    0&    0&    0&    0\\
 0.5&  -0.5 &  0&    0&    0.25& -0.25&  0\\
 0.5 & -0.5 &  0&    0&   -0.25 & 0.25&  0\\
 0&    1&    0&    0&    0&    0&    0\\
 1&    0&   -1&    1&    0&    0&    0\\
 -0.5&   0.5&   0&    0&    0.75 &  0.25&  0\\
 -0.5 &  0.5 &  0&    0&    0.25&  0.75&  0\\
 0&   -1&    0&    0&    0&    0&    1
 \end{array}\right]
\]
Here it holds that $C\, b' = x'  = [0.01\; 0.445\; 0.445\; 0.1\; 0.99\; 0.555\; 0.555\; 0.9]^T$.
Note that the LP and the linear algebra solution differ. For the LP solution $x$
with $c^T = [10\; 1\; 1\; 1\; 1\; 1\; 1\; 1]$, we have that $c^T x = 3.09$, whereas the linear
algebra solution $x'$ yields $c^T x' = 4.09$, which explains
why it was not selected by LP.
\end{itemize}
\end{example}

\subsection{Permutations of solutions}

In this section, we focus on the submatrices $A$ of $A'$ and vector $u$.
Let $A$ and $A_l$ be two Boolean matrices, where $A_l$ is obtained from
$A$ by pre-multiplying $A$ by a \emph{permutation matrix} $P$, basically
the identity matrix $I$ with rows reordered, with $A_l = P A$; in other
words, $A_l$ is the same as $A$ with rows reordered. For the solutions
of the set of equations $A_l u = b$ we have that
\begin{eqnarray*}
 A_l u =  P A u = b \; \Leftrightarrow \; A u = P^{-1}\,b = P^T\,b = b'
\end{eqnarray*}
(as permutation matrices are orthogonal matrices, they have the
special property that $P^{-1} = P^T$)
which implies that that the solution $u$ does not change. However, the
solution of $A u' = b$ is likely to be different from $A_l u = b$ as for
the row space it holds that ${\cal R}(A^T) = {\cal R}(A_l^T)$, but
$b$ should be in the column space ${\cal R}(A)$, and the column
spaces ${\cal R}(A)$ and ${\cal R}(A_l)$ are likely to differ.

Thus, rather than considering reordering the rows, we should consider
columns of $A$.  We have that the column space ${\cal R}(A)$ remains
the same despite  column reordering, meaning that
$b \in {\cal R}(A)$ if the equation
$A x = b$ has a solution $x$, then also $b \in {\cal R}(A_r)$, with $A_r = A P$, i.e.\
$A$ after post-multiplication by the permutation matrix $P$.
Hence, a starting point for reducing (potentially)
irrelevant permutations of solution vector $u$ is by reducing
the number of reorderings of columns of a previously generated
Boolean matrix $A$.

The following example will demonstrate that a permutation
of a matrix $A$ will not always yield a permutation of a solution.
\begin{example}
We demonstrate the consequences of pre- and post-multiplication by
a permutation matrix $P$.
Let $b = [0.99\; 0.1]^T$ in the equation $A u = b$, with
\[A = \left[\begin{array}{rrr}
     1 & 1 & 0 \\
              0 & 1 & 0
    \end{array}\right]
\]
a (probability vector) solution is $u = [0.89\; 0.1\; 0.01]^T$. Now, let
\[A_l = \left[\begin{array}{rrr}
        0 & 1 & 0 \\
        1 & 1 & 0
       \end{array}\right]
      = P A = \left[\begin{array}{rr}
                0 & 1\\
                1 & 0
              \end{array}\right]
      \left[\begin{array}{rrr}
     1 & 1 & 0 \\
     0 & 1 & 0
    \end{array}\right]\]
be a reordering of $A$; a solution is $u' = [-0.89\; 0.99\; 0.9]^T$,
which is neither a permutation of $u$ nor a probability
vector.
Subsequently, consider
\[A_r = A P' = A \left[\begin{array}{rrr}
                   0 & 1 & 0 \\
                   1 & 0 & 0 \\
                   0 & 0 & 1
                      \end{array}\right]
                   = \left[\begin{array}{rrr}
                1 & 1 & 0 \\
                1 & 0 & 0
               \end{array}\right]\]
where we reordered the columns of matrix $A$. The set of equations
\[A_r u'' = \left[\begin{array}{rrr}
                1 & 1 & 0 \\
                1 & 0 & 0
               \end{array}\right] u'' = b
\]
yield $u'' = [0.1\;  0.89 \; 0.01]^T$, which is a permutation of $u$,
and a probability vector. Another permutation is obtained by
\[A_r' = A P'' = A \left[\begin{array}{rrr}
                   0 & 0 & 1 \\
                   0 & 1 & 0 \\
                   1 & 0 & 0
                      \end{array}\right]
                   = \left[\begin{array}{rrr}
                0 & 1 & 1 \\
                0 & 1 & 0
               \end{array}\right]\]
where again we reordered the columns of matrix $A$. The set of equations
\[A_r' u''' = \left[\begin{array}{rrr}
                0 & 1 & 1 \\
                0 & 1 & 0
               \end{array}\right] u''' = b
\]
yield $u''' = [0.01\;  0.1 \; 0.89]^T$, which is another permutation of $u$,
and a probability vector.    
\end{example}
The example illustrates
the following simple equivalence.
\begin{proposition}
\label{prop.eq}
  Let $u$ be a solution of the set of linear equations $A u = b$, then
  if $AP u' = A_r u' = b$, with solution $u'$ and permutation matrix
  $P$, then vector $u$ is a permutation of vector $u'$ and vice versa.
\end{proposition}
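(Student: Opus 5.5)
The plan is to reduce the statement to the elementary identity $A(Pu') = (AP)u'$ and then argue about how many solutions there are. First, given a solution $u'$ of $A_r u' = APu' = b$, I set $\tilde u = Pu'$. By associativity of matrix multiplication, $A\tilde u = A(Pu') = (AP)u' = b$, so $\tilde u$ solves the original system $Au = b$. Since $P$ is a permutation matrix, $\tilde u$ is literally a reordering of the components of $u'$. Because the entries are only rearranged, $\tilde u \geq 0$ and $e^T\tilde u = 1$ hold exactly when the same holds for $u'$, so probability vectors correspond to probability vectors. This matches what the example before the statement shows: $u''$ and $u'''$ come from $u$ by the column reorderings $P'$ and $P''$.

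Second, I treat the converse direction symmetrically. Using orthogonality of permutation matrices, $P^{-1} = P^T$, as already used in the discussion of row permutations, I write $A = A_r P^T$. Then for a solution $u$ of $Au = b$, the vector $P^T u$ solves $A_r (P^Tu) = AP P^T u = Au = b$, and $P^Tu$ is a permutation of $u$. Hence the map $u' \mapsto Pu'$ is a bijection between the solution set of $A_r u' = b$ and that of $Au = b$, with inverse $u \mapsto P^T u$. The bijection preserves the probability-vector constraints, and so also feasibility of the extended LP system $A'x = b'$, after permuting the slack variables $w$ with the same $P$.

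The main obstacle is the literal phrasing: "$u$ is a permutation of $u'$" for an \emph{arbitrary} pair of solutions. The argument above only gives that $Pu'$ is \emph{some} solution of $Au=b$. To conclude $Pu' = u$ itself, I need uniqueness. So I will invoke the case analysis of Section~\ref{sec.exist}: when $r(A) = n$ (the $m = n$ invertible case or the $m > n$ full-column-rank case), $Au = b$ has at most one solution, forcing $u = Pu'$ and $u' = P^Tu$. In the underdetermined case $m < n$ with free variables, I will state the result as a correspondence of solution sets rather than of individual vectors. Example~\ref{ex2}, compared with Example~\ref{ex1}, shows that two solutions of the same system need not be permutations of each other. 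I will make this rank hypothesis explicit, or read the proposition as asserting that every solution of one system is a permutation of some solution of the other, which is what the bijection delivers.
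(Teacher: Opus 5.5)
Your argument is correct and is essentially the paper's proof. That proof is the single identity $A(Pu') = (AP)u' = b$ with $u := Pu'$, and your converse via $P^T$ and your remark that the literal claim for an arbitrary pair of solutions needs uniqueness (e.g.\ $r(A)=n$) are valid refinements the paper's proof skips. One slip in your supporting remark: Examples~\ref{ex1} and~\ref{ex2} use different deterministic matrices (second rows $[0\;1\;1]$ versus $[0\;1\;0]$), so they do not give two solutions of one system; a correct witness is the $m<n$ example of Section~\ref{sec.exist}, where the LP solution $x$ and the right-inverse solution $x'$ are both nonnegative solutions of the same $A'x=b'$ yet are not permutations of each other.
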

\begin{proof}
  $AP u' = A_r u' = b \Leftrightarrow A (P u') = b \Leftrightarrow A u = b$,
  with $u = Pu'$.
\end{proof}
\noindent
This simple proposition means that rather than first generating
Boolean matrices $A$, one can also generate, more efficiently,
permutations of solutions.  Unfortunately, the proposition does not
prevent redundancy in the generation process of matrices $A$. However,
whereas it is hard to come up with a way to prevent generating
redundant matrices $A$, it is possible, at least partially, to bypass
generating redundant solutions. As discussed in Section
\ref{sec.linalg}, the existence of a solution $u$ of $A u = b$ is
equivalent to the statement that $b \in \mathcal{R}(A)$, and the order
of columns in $A$ does not matter. A simple approach to prevent
repeatedly generating permutations of solution vectors can be obtained
by checking whether a particular column space $\mathcal{R}(A)$ has been
generated before. One can do this by first transforming a Boolean
matrix $A$ to a canonical form with preservation of the column
space. Unfortunately, the actual transformation will cost (although
polynomial) computation time. Alternatively, one can sort the columns
of a matrix $A$ for which a solution of the equation $A u = b$ exists,
and for any newly generated matrix $A'$ check whether the sorted
version of this matrix was previously generated. If it does, then
clearly the solution $u'$ of $A' u' = b$, which is guaranteed to
exist, is a permutation of $u$.
\begin{proposition}
  \label{prop.lex}
  Let $A u = b$ be a system of linear equations with at least
  one solution vector $u$, and let $\textrm{sort}(A) = A'$ be a matrix
  with the columns of $A$ in lexicographic order. Then, $A' u' = b$
  has a solution $u'$, with $u' = P u$, i.e.\ $u'$ is a permutation
  of $u$ and vice versa, where $P$ is a permutation matrix.
\end{proposition}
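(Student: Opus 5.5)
Sorting the columns of $A$ lexicographically is a column permutation, so the plan is to reduce the statement to Proposition \ref{prop.eq}. Let $a_1,\ldots,a_n$ be the columns of $A$ and let $\sigma$ be a permutation of $\{1,\ldots,n\}$ that puts them in lexicographic order, so that $\textrm{sort}(A)$ has $a_{\sigma(j)}$ as its $j$th column. If several columns are equal (possible, since $A$ is Boolean), the sort leaves their relative order open; any tie-breaking choice of $\sigma$ yields the same matrix $\textrm{sort}(A)$, and I fix one such $\sigma$. Let $Q$ be the $n\times n$ permutation matrix with $Q e_j = e_{\sigma(j)}$. Then $A Q e_j = a_{\sigma(j)}$, and hence $\textrm{sort}(A) = A' = AQ$ exactly.

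For existence, I define $u' = Q^{-1}u = Q^T u$, using that permutation matrices are orthogonal, as noted earlier in the paper. Then
\[A' u' = A Q Q^T u = A u = b,\]
so $u'$ solves $A'u'=b$. Its components are $u'_j = u_{\sigma(j)}$, so $u'$ is a permutation of $u$. This gives $u' = Pu$ with $P = Q^T$, which is the form stated in the proposition. Because $u'$ has the same entries as $u$, it is also a probability vector whenever $u$ is one, so nonnegativity and the normalisation row are carried along.

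For the converse ("vice versa"), I start from any solution $u'$ of $A'u'=b$. Proposition \ref{prop.eq}, applied with $A_r = AQ = A'$, gives that $Qu'$ solves $Au=b$, and $Qu'$ is a permutation of $u'$. Equivalently, $u = P^T u'$ with $P^T$ again a permutation matrix.

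There is no real obstacle here; the argument is routine. The only step needing care is the claim that lexicographic sorting is exactly right-multiplication by a permutation matrix, including when $A$ has duplicate columns, where $\sigma$ is not unique. A second subtlety is that uniqueness is not claimed: when $r(A)<n$, "a solution $u'$" has to be read as the particular solution $Q^Tu$ corresponding to the given $u$, or else as the correspondence of solution sets $\{u' : A'u'=b\} = Q^T\{u : Au=b\}$. I would state the result in this set form to cover both readings.
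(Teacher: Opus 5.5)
Your proof is correct and is the same argument as the paper's one-line proof: sorting the columns is right-multiplication by a permutation matrix $Q$, so $A'(Q^T u) = AQQ^T u = Au = b$, which is exactly the paper's identity $A'Pu = Au = b$ with $P = Q^T$. Your extra care about duplicate columns and non-uniqueness sharpens the loosely worded statement without changing the route. In particular, you read $u'$ as the particular solution $Q^T u$, or equivalently state that the two solution sets are permuted images of each other, when $r(A) < n$.
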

\begin{proof}
Simply note that $A' P u = A u = b$.
\end{proof}
Clearly, Proposition~(\ref{prop.lex}) is almost identical to Proposition~(\ref{prop.eq})
except that Proposition~(\ref{prop.lex}) can be used to filter out matrices,
once generated, from further processing.

\section{A heuristic search algorithm}
\label{sec.alg}

As suggested in the previous section: there is not always a unique
solution to the equation $A' x = b'$, with an unknown Boolean submatrix
$A$ of $A'$, an unknown vector $x = [u \; w]^T$, with $u$ a probability vector
and slack variables $w$, and
the \emph{known} vector $b'$. The only thing we know beforehand,
namely that $A$ is a Boolean (binary) matrix, yields as a possible approach
to systematically generate the matrix $A$, and subsequently
solve for $[u\; w]^T$ using linear programming. This is what the
python code below does, where we have recoded $[u\; w]^T$ simply
as $u$ (and $u.x$ is the feasible linear programming solution).

In addition, in the code below, we use the linear algebra
approach to the existence and uniqueness of solutions
as an alternative to the LP approach.  This will
allow exploring the alternative cases, based on the dimensions
of the matrix $A$, introduced in Section \ref{sec.exist}.

As a simple heuristic we use a cache to store column sorted matrices
$A$ for which a solution was found using LP as a partially successful
method to prevent generating the same permutated solutions multiple
times. As mentioned above, a better method would check for equivalence
of the column space of the matrix, but that would be even more time consuming.
Finally, it is, in principle, also possible that for two different matrices $A, A'$
it holds that  $A u = A' u = b$, i.e.\ the same solution $u$
is obtained. We give an example below.
\begin{example}
Consider the following two examples:
\begin{eqnarray*}
  A &= &\left[\begin{array}{rrrrr}
  1 & 0 & 0 & 0 & 0 \\ 
  1 & 1 & 1 & 1 & 0 \\ 
  0 & 1 & 0 & 0 & 0 \\ 
  0 & 1 & 1 & 0 & 0 \\ 
  0 & 0 & 1 & 0 & 0 \\ 
  1 & 1 & 1 & 1 & 1
  \end{array}\right] \\[2ex]
u & = & \left[0.1\;  0.4\;  0.3\;  0.19\; 0.01\right]^T \\[2ex]
  A' &=& \left[\begin{array}{rrrrr}
 1 & 0 & 0 & 0 & 0 \\
 1 & 1 & 1 & 1 & 0 \\ 
 0 & 1 & 0 & 0 & 0 \\ 
 0 & 1 & 1 & 0 & 0 \\ 
 1 & 0 & 0 & 1 & 1 \\ 
 1 & 1 & 1 & 1 & 1 
\end{array}\right]
\\[2ex]
u' &= & \left[0.1 \; 0.4 \; 0.3 \; 0.19 \; 0.01\right]^T
\end{eqnarray*}
The two matrices $A, A'$ differ, and yet the solution obtained
is identical
\end{example}
The python code presented below, that was developed
according to the theory elaborated in the section above,
supports experimentation of the process of
transforming a BN to a PSCM.
\begin{verbatim}
import numpy as np
from scipy import linalg, optimize
import copy

def Ext(setofsets, add):
    """
    add new elements to each set of powerset
    """
    temp = copy.deepcopy(setofsets)
    l = len(setofsets)
    for i in range(l):
        setofsets[i].append(add) # setofsets is mutated
    temp.extend(setofsets)
    return temp
    
def Powerset(set):
    """
    compute powerset of set
    """
    if len(set) != 0:
        car = set.pop(-1)
        return Ext(Powerset(set), car)
    else:
        return [[]]

def ComputeSolutions(m, n, b, limit):
    A = np.zeros((m+1, n))
    indices = list(range(0, n))
    ps = Powerset(indices)
    pslist = list(range(0, m))
    for i in range(0, m):
        pslist[i] = copy.deepcopy(ps)
        pslist[i].pop(0) # remove empty list
    O = np.zeros((m+1, n)) # not used, just for consistency with theory
    I = np.eye(n)
    e = np.ones(n)
    A[m, 0:n] = e
    o = np.zeros(n) # not used, just for consistency with theory
    Ap = np.zeros((m + n + 1, 2 * n))
    Ap[0:m+1, 0:n] = A 
    Ap[m+1:(n + m + 1), 0:n] = I
    Ap[m+1:(n + m + 1), n:(2 * n)] = I
    Solve(Ap, A, m, n, pslist, [], limit)

def Occurs(A, arraylist):
    """
    Array A is member of a list of arrays
    """
    if arraylist == []:
        return(False)
    elif np.all(A == arraylist.pop()):
        return(True)
    else:
        return(Occurs(A, arraylist))

def Solve(Ap, A, m, n, sos, cache, limit):
    """
    vary in indices recursively going through set of sets (sos);
    limit is largest size of list that is used, to reduce the search
    under the assumption (heuristic) that smaller index sets are more
    likely to provide a solution.
    """
    for set in sos[0]:
        if len(set) <= limit:
            A[m - len(sos), set] = 1
            sortedarray = np.sort(A, 1)
            if len(sos) > 1:
                Solve(Ap, A, m, n, sos[1:len(sos)], cache, limit)
            Ap[0:m + 1, 0:n] = A
            obj = np.ones(2 * n)
            
            if not(Occurs(sortedarray, cache.copy())):
                u = optimize.linprog(obj, A_eq = Ap, b_eq = b,
                                     bounds = (0, None))
                if u.success:
                    print("Successful matrix & solution:")
                    print(Ap, "\n", u.x) 
                    if (Ap.shape[0] == Ap.shape[1]):
                        det = linalg.det(Ap)
                        if det == 0:
                           print("\nSingular square matrix \n")
                        else:
                            Ainv = linalg.inv(Ap)
                            u2 = np.dot(Ainv, b)
                            print("\nInverse * b = ", u2, "\n")
                    elif (Ap.shape[0] > Ap.shape[1]):
                        leftinverse = np.dot(linalg.inv(np.dot(np.transpose(Ap),Ap)),
                                             np.transpose(Ap))
                        u2 = np.dot(leftinverse, b)
                        print("\nLeftinverse * b = ", u2, "\n")
                    else:
                        rightinverse = np.dot(np.transpose(Ap),
                                              linalg.inv(np.dot(Ap,np.transpose(Ap))))
                        u2 = np.dot(rightinverse, b)
                        print("\nRight inverse * b = ", u2, "\n")
                    cache.append(sortedarray)
            A[m - len(sos), set] = 0
\end{verbatim}
Next, an example run of the python code
with some solutions generated for Example \ref{ex1} is shown:
\begin{verbatim}
m = 2; n = 3; limit = 2 # Example 1

b = np.array([[0.99], [0.1], [1], [1], [1], [1]])

>>> ComputeSolutions(m, n, b, lim)

Successful matrix & solution:
[[1. 1. 0. 0. 0. 0.]
 [1. 0. 0. 0. 0. 0.]
 [1. 1. 1. 0. 0. 0.]
 [1. 0. 0. 1. 0. 0.]
 [0. 1. 0. 0. 1. 0.]
 [0. 0. 1. 0. 0. 1.]]

[0.1  0.89 0.01 0.9  0.11 0.99]

Inverse * b =  [[0.1]
 [0.89]
 [0.01]
 [0.9 ]
 [0.11]
 [0.99]] 

Successful matrix & solution:
[[1. 1. 0. 0. 0. 0.]
 [1. 0. 1. 0. 0. 0.]
 [1. 1. 1. 0. 0. 0.]
 [1. 0. 0. 1. 0. 0.]
 [0. 1. 0. 0. 1. 0.]
 [0. 0. 1. 0. 0. 1.]]

[0.09 0.9  0.01 0.91 0.1  0.99]

Inverse * b =  [[0.09]
 [0.9 ]
 [0.01]
 [0.91]
 [0.1 ]
 [0.99]] 
\end{verbatim}

\section{Discussion}
\label{sec.disc}

In this paper, we introduced a systematic way for transforming a BN to
a PSCM. As there is already a significant amount of theory, including
its implementations within software environments, to learn both graph
structure and parameterisations of a BN from data, the idea of
transforming a BN to a PSCM is appealing. Practically speaking, one
may start by developing a BN from a mixture of data and expert
knowledge, which, after having been successfully evaluated, may be
turned into a PSCM. A requirement for a successful transformation is
that a PSCM's exogenous variables are introduced with a domain consisting
of sufficient number of values to preserve the probabilistic
information. In addition, the causal functions or deterministic
conditional probability distributions that admit generating an
equivalence PSCM are usually not unique. Actually, the process of
generating solutions has combinatorial properties, and effective
heuristics to reduce the number of alternative solutions generated
have not been found so far.

One disadvantage of a PSCM is its unclear semantics as different
definitions of probability distributions of the exogenous variables
may give equivalent results. This also raises doubts about the
usefulness of PSCMs as a suitable alternative to BNs, certainly in the
context of machine learning.

\bibliographystyle{plain}
\bibliography{reference}
\end{document}